\newtheorem{proposition}{Propostion}
\newcommand*{\underuparrow}[1]{\ensuremath{\underset{\uparrow}{#1}}}
\title{Variationally Inferred Sampling Through a Refined Bound}
          \author{ {\bf Victor Gallego} \\
Institute of Mathematical Sciences (ICMAT) \&\\
SAMSI, Duke University \\
\url{victor.gallego@icmat.es}
\And
{\bf David Rios Insua}  \\
Institute of Mathematical Sciences (ICMAT) \& \\
SAMSI, Duke University \\
\url{david.rios@icmat.es}
}
\begin{document}

\maketitle

%

%






\begin{abstract}
  A framework to boost the efficiency of Bayesian inference in probabilistic programs is introduced by embedding a sampler inside a variational posterior approximation. We call it the refined variational approximation. Its strength lies both in ease of implementation and automatically tuning of the sampler parameters to speed up mixing time using automatic differentiation. Several strategies to approximate \emph{evidence lower bound} (ELBO) computation are introduced. 
  Experimental evidence of its efficient performance is shown solving an influence diagram in a high-dimensional space using a conditional variational autoencoder (cVAE) as a deep Bayes classifier; an unconditional VAE on density estimation tasks; and state-space models for time-series data.
\end{abstract}

\section{INTRODUCTION}
Probabilistic programming offers powerful tools for Bayesian modelling, a framework for describing prior knowledge and reasoning about uncertainty. A probabilistic programming language (PPL) can be viewed as a programming language extended with random sampling and Bayesian conditioning capabilities, complemented with an inference engine that produces answers to inference, prediction and decision making queries. Some examples are WinBUGS \cite{lunn2000winbugs}, Stan \cite{carpenter2017stan}, or the recent Edward \cite{tran2018simple} and Pyro \cite{bingham2018pyro}. The machine learning and artificial intelligence communities are pervaded by models that can be expressed naturally through a PPL. Variational autoencoders (VAE) \cite{kingma2013auto} or  hidden Markov models (HMM) \cite{rabiner1989tutorial} are two relevant examples. 

If we consider a probabilistic program to define a distribution $p(x, z)$, where $x$ are observations and $z$  denote both latent variables and parameters, then we are interested in answering queries involving the posterior $p(z | x)$. This distribution 
is typically intractable but, conveniently, PPLs provide inference engines to approximate this distribution using Monte Carlo methods (e.g., Markov Chain Monte Carlo (MCMC) \cite{andrieu2010particle} or Hamiltonian Monte Carlo (HMC) \cite{neal2011mcmc}) or variational approximations (e.g. Automatic Differentiation Variational Inference (ADVI) \cite{kucukelbir2017automatic}). Whereas the latter are biased and tend to underestimate uncertainty, the former methods may be exceedingly slow depending on the target distribution. For such reason, over the recent years, there has been an increasing interest in developing more efficient posterior approximations \cite{nalisnick2016approximate,salimans2015markov,tran2015variational} and inference engines that aim to be as general and flexible as possible, so they can be used easily for any probabilistic model written as a program \cite{wood2014new, ge2018t}.

It is well known that the performance of a sampling method depends on the parameters used, \cite{papaspiliopoulos2007general}. In this work, we propose a framework to automatically adapt the shape of the posterior and also tune the parameters of a posterior sampler with the aim of boosting Bayesian inference efficiency in probabilistic programs. Our framework can be regarded as a principled way to enhance the flexibility of the variational posterior approximation, yet can be seen also as a procedure to tune the parameters of an MCMC sampler.

Our contributions can be summarised as follows:

\begin{itemize}
    \item A flexible and unbiased variational approximation to the posterior, which consists of improving an initial variational approximation with a stochastic process. An analysis of its key properties is also introduced. 
    \item Several strategies for the ELBO optimization using the previous variational approximation.
\end{itemize}

\subsection{Related work}

The idea of preconditioning the posterior distribution to speed up the mixing time of an MCMC sampler has recently been explored in \cite{hoffmanneutra} and \cite{PhysRevLett.121.260601}, where a reparameterization is learned before performing the sampling via HMC. Both papers extend seminal work in \cite{parno2014transport} by learning an efficient and expressive deep, non-linear transformation instead of a polynomial regression. However, they do not account for tuning the parameters of the sampler as we introduce in Section \ref{sec:main}, where a fully, end to end differentiable sampling scheme is proposed.

The work of \cite{rezende2015variational} introduced a general
framework for constructing more flexible variational distributions, called normalizing flows. These transformations are one of the main techniques to improve the flexibility of current VI approaches and have recently pervaded the literature of approximate Bayesian inference with current developments such as continuous-time normalizing flows \cite{chen2018continuoustime} which extend an initial simple variational posterior with a discretization of Langevin dynamics. However, they require a generative adversarial network (GAN) \cite{goodfellow2014generative} to learn the posterior, which can be unstable in high-dimensional spaces. We overcome this issue with the novel formulation stated in Section \ref{sec:main}. Our framework is also compatible with different optimizers, not only those derived from Langevin dynamics. Other recent proposals to create more flexible variational posteriors are based on implicit approaches, which typically require a GAN \cite{huszar2017variational}, or implicit schema such as UIVI \cite{pmlr-v89-titsias19a} or SIVI \cite{yin2018semi}. Our variational approximation is also implicit, but we use a sampling algorithm to drive the evolution of the density, combined with a Dirac delta approximation to derive an efficient variational approximation, as we report through extensive experiments in Section \ref{sec:exps}.

Closely related to our framework is the work of \cite{hoffman2017learning}, where a VAE is learned using HMC. We use a similar compound distribution as the variational approximation. However, our framework allows for any SG-MCMC sampler (via the entropy approximation strategies introduced) and also the tuning of sampler parameters via gradient descent.
Our work is also related to the recent idea of amortization of samplers \cite{feng2017learning}. A common problem with these approaches is that they incur in an additional error, the so-called amortization gap \cite{cremer2018inference}. We alleviate this by evolving a set of particles $z_i$ with a stochastic process in the latent space after learning a good initial distribution. Hence, the bias generated by the initial approximation is significantly reduced after several iterations of the process. A recent article related to our paper is \cite{pmlr-v97-ruiz19a}, who define a compound distribution similar to our framework. However, we focus on an efficient approximation using the reverse KL divergence, the standard and well understood divergence used in variational inference, which allows for tuning sampler parameters and achieving superior results.


\section{BACKGROUND}
Consider a probabilistic model $p(x|z)$ and a prior distribution $p(z)$ where $x$ denotes an observation and $z \in \mathbb{R}^d$ an unobserved latent variable or parameter, depending on the context. We are interested in performing inference regarding the unobserved variable $\bm{z}$, by approximating its posterior distribution:
$$
p(\bm{z} | \bm{x}) = \frac{ p(\bm{z})p(\bm{x}| \bm{z}) }{ \int p(\bm{z})p(\bm{x}| \bm{z}) d\bm{z} } = \frac{ p(\bm{z})p(\bm{x}| \bm{z}) }{ p(\bm{x}) } = \frac{p(\bm{z},\bm{x})}{p(\bm{x})}.
$$
The previous integral $p(\bm{x}) = \int p(\bm{z})p(\bm{x}| \bm{z}) d\bm{z}$ is typically intractable; no general explicit expressions of the posterior are available. Thus, several techniques have been proposed to perform approximate posterior inference.

\subsection{Inference as optimization}\label{sec:iasopt}

Variational inference, \cite{kucukelbir2017automatic}, tackles the problem of approximating the posterior $p(z | x)$ with a tractable parameterized distribution $q_{\phi}(z|x)$. The goal is to find parameters $\phi$ so that the variational distribution (also referred to as the variational guide or variational approximation) $q_{\phi}(z|x)$ is as close as possible to the actual posterior. Closeness is typically measured through 
Kullback-Leibler 
divergence $KL(q || p)$, which is reformulated into the ELBO, the objective to be optimized using stochastic gradient descent techniques:
\begin{equation}\label{eq:elbo}
\mbox{ELBO}(q) = \mathbb{E}_{q_{\phi}(z|x)} \left[ \log p(x,z) - \log q_{\phi}(z|x)\right].
\end{equation}
Typically, a deep, non-linear model conditioned on observation $x$ defines the mean and covariance matrix of a
Gaussian distribution $q_{\phi}(z|x) \sim \mathcal{N}(\mu_{\phi}(x), \sigma_{\phi}(x))$, to enhance flexibility. 


\subsection{Inference as sampling}

HMC \cite{neal2011mcmc} is an effective sampling method for models whose probability is point-wise computable and differentiable. 
When scalability is an issue, \cite{welling2011bayesian} proposed a formulation of a continuous-time Markov process that converges to a target distribution $p(z | x)$ with $z \in \mathbb{R}^d$. It is based on the Euler-Maruyama discretization of Langevin dynamics:
\begin{eqnarray}\label{eq:sgmcmc}
z_{t+1} \leftarrow z_{t} - \eta_t \nabla \log p(z_t,x)  + \mathcal{N}(0, 2\eta_t I),
\end{eqnarray}
where $\eta_t$ is the step size. The 
required gradient $\nabla \log p(z_t,x)$ can be estimated using mini-batches of data. Several extensions of the original Langevin sampler have been proposed to increase the mixing speed, see for instance \cite{li2016preconditioned,li2016high,abbati2018adageo,gallego2018stochastic}.


\section{THE VARIATIONALLY INFERRED SAMPLING (VIS) FRAMEWORK}\label{sec:main}

In standard VI, the variational approximation $q_\phi(z|x)$ is analytically tractable.
It is typically chosen as a factorized Gaussian distribution as described in Section \ref{sec:iasopt}. 

We propose to use a more flexible approximating posterior by embedding a sampler through:
\begin{equation}\label{eq:q}
q_{\phi,\eta}(z|x) = \int Q_{\eta, T}(z|z_0)q_{0,\phi}(z_0|x)dz_0,
\end{equation}
where $q_{0,\phi} (z | x)$ is the initial and tractable density (i.e., the starting state for the sampler). We will refer to $q_{\phi,\eta}(z|x)$ as the refined variational approximation. The conditional distribution $Q_{\eta, T}(z|z_0)$ refers to a stochastic process parameterized by $\eta$ used to evolve the original density $q_{0,\phi}(z|x)$ and achieve greater flexibility. In the following subsections we describe particular forms of $Q_{\eta, T}(z|z_0)$.
When $T=0$, no refinement steps are performed, so the refined variational approximation coincides with the original variational approximation, $q_{\phi,\eta}(z|x) = q_{0, \phi}(z|x)$. As $T$ increases, the variational approximation will be closer to the exact posterior, provided that $Q_{\eta, T}$ is a valid MCMC sampler.
Next, we maximize a refined ELBO objective, 
\begin{equation}\label{eq:ELBO}
\mbox{ELBO}(q) = \mathbb{E}_{q_{\phi, \eta}(z|x)} \left[ \log p(x,z) - \log q_{\phi, \eta}(z|x)\right]
\end{equation}
to optimize the divergence $KL(q_{\phi,\eta}(z|x) ||  p(z|x))$. The first term of the ELBO only requires sampling from $q_{\phi,\eta}(z|x)$; however the second term, the entropy $-\mathbb{E}_{q_{\phi,\eta}(z|x)} \left[ \log q_{\phi,\eta}(z | x) \right]$ requires also evaluating the evolving, implicit density. 

Regarding $Q_{\eta, T}(z|z_0)$, we consider the following families of sampling algorithms.


\subsection{The sampler $Q_{\eta, T}(z|z_0)$ } \label{sec:grad}
When the latent variables $z$ are continuous ($z \in \mathbb{R}^d$), we evolve the original variational density $q_{0,\phi}(z|x)$ through a stochastic diffusion process. To make it tractable, we discretize the Langevin dynamics using the Euler-Maruyama scheme, arriving at the stochastic gradient Langevin dynamics (SGLD) sampler.
We then follow the process $Q_{\eta,T} (z | z_0)$ (representing $T$ iterations of an MCMC sampler). As an example, for the SGLD sampler $z_i = z_{i-1} + \eta \nabla \log p(x, z_{i-1}) + \xi_{i},$ where $i$ iterates from 1 to $T$; in this case, the only parameter of the SGLD sampler is the learning rate $\eta$. The noise for the SGLD is $\xi_i \sim \mathcal{N}(0, 2\eta I)$. 
The initial variational distribution $q_{0, \phi}(z|x)$ is a Gaussian parameterized by a deep neural network (NN). Then, $T$ iterations of a sampler $Q$ parameterized by $\eta$ are applied leading to $q_{\phi, \eta}$. 

An alternative may be given by ignoring the noise vector $\xi$ \cite{mandt2017stochastic}, thus refining the initial variational approximation with just stochastic gradient descent (SGD).
 Moreover, we can use Stein variational gradient descent (SVGD) \cite{liu2016stein} or a stochastic version \cite{gallego2018stochastic} to apply repulsion between particles and promote a more extensive exploration of the latent space. 

\subsection{Approximating the entropy term}\label{sec:approx}

We propose a set of guidelines for the ELBO optimization using the refined variational approximation.

    \paragraph{Particle approximation (VIS-P).} We can view the flow $Q_{\eta,T} (z | z_0)$ as a mixture of Dirac deltas (i.e., we approximate it with a finite set of particles). That is, we sample $z^1, \ldots, z^K \sim Q_{\eta,T} (z | z_0)$ and use $\tilde{Q}_{\eta,T}(z| z_0) = \frac{1}{K} \sum_{i=1}^K \delta(z - z^i)$. Thus, that entropy term is zero so $\mathbb{E}_{q_{\phi,\eta}(z|x)} \left[ \log q_{\phi,\eta}(z | x) \right] = \mathbb{E}_{q_{0,\phi}(z|x)} \left[ \log q_{0,\phi}(z | x) \right]$. If using SGD as the sampler, the resulting ELBO is tighter than the one with no refinement (see Section \ref{sec:rewriting}). However, discarding the entropy in the sampling process results in variational approximations that are too concentrated around the MAP solution, and this might be undesirable for training generative models. 
    \paragraph{MC approximation (VIS-MC).} Instead of performing the full marginalization in integral (\ref{eq:q}), we can approximate it as $q_{\phi,\eta}(z_T|x) = \prod_{i=1}^T q_\eta(z_i | z_{i-1}) q_{0,\phi}(z_0|x)$. The entropy for each factor can be straightforwardly computed, i.e. for the case of SGLD, $q_\eta(z_i | z_{i-1}) = \mathcal{N}(z_{i-1} + \eta \nabla \log p(x, z_{i-1}), 2\eta I)$. This approximation keeps track of a better estimate of the entropy than the particle approximation.
        \paragraph{Gaussian approximation (VIS-G).} Targeted to settings were it could be helpful to have a posterior approximation that places density over the whole latent space. For the particular case of using SGD as the inner kernel, we have
\begin{align*}
z_0 &\sim q_{0,\phi}(z_0|x) = \mathcal{N}(z_0 | \mu_\phi(x), \sigma_\phi(x))\\
z_i &= z_{i-1} + \eta \nabla \log p(x, z_{i-1}), \qquad i=1,\ldots,T.
\end{align*}
By treating the gradient terms as points, we have that the refined variational approximation can be computed as
$ q_{\phi,\eta}(z|x) = \mathcal{N}(z | z_T, \sigma_\phi(x))$. Note that there is an implicit dependence on $\eta$ through $z_T$.
    \paragraph{Deterministic flows (VIS-D).} If using a deterministic flow (such as SGD or SVGD), we can keep track of the change in entropy at each iteration using the change of variable formula as  in \cite{duvenaud2016early}. However, this requires a costly Jacobian computation, making it unfeasible to combine with our \emph{backpropagation through the sampler} scheme (Sec. \ref{sec:tuning}) for moderately complex problems, so in this work we won't explore this approximation further.
    
    \paragraph{Fokker-Planck approximation (VIS-FP).} Using the Fokker-Planck equation, we can keep track of the density $q_{\phi,\eta}(z|x)$ at each iteration. Then, we may approximate it using a mixture of Dirac deltas. The derivation of this approximation is slightly longer than the previous ones, so we introduce it in Appendix (Supplementary Material) \ref{app:fp}.

\subsection{Tuning sampler parameters}\label{sec:tuning}

In standard VI, the variational approximation $q(z|x;\phi)$ is parameterized by $\phi$. The parameters are learned using SGD or variants such as Adam \cite{kingma2014adam}, using the gradient $\nabla_{\phi} \mbox{ELBO}(q)$. Since we have shown how to embed a sampler inside the variational guide, it is also possible to compute a gradient of the objective with respect to the sampler parameters $\eta$. For instance, we can compute a gradient with respect to the learning rate $\eta$ from the SGLD or SGD process from Section \ref{sec:grad}, $\nabla_{\eta} \mbox{ELBO}(q)$, to search for an optimal step size at every VI iteration. This is an additional step apart from using the gradient $\nabla_{\phi} \mbox{ELBO}(q)$ which is used to learn a good initial sampling distribution.

\section{ANALYSIS OF VIS}

We now highlight and study in detail key properties of the proposed VIS framework.

\subsection{Unbiasedness}

The VIS framework is targeted towards SG-MCMC samplers, where we can compute gradients wrt sampler hyperparameters to speed up mixing time, a common major problem in MCMC.
After backpropagating a few iterations through the SG-MCMC sampler and learning a good initial distribution, one can use the learned sampler normally, in the testing phase, so standard consistency results of SG-MCMC apply as $T \rightarrow \infty$.

\subsection{Refinement of the ELBO}\label{sec:rewriting}

Performing variational inference with the refined variational approximation can be regarded as using the original variational guide while optimizing an alternative, tighter ELBO. Note that for a refined guide of the form $q(z|z_0)q(z_0|x)$, the objective function can be written as
$$
\mathbb{E}_{q(z|z_0)q(z_0|x)} \left[ \log p(x, z) - \log q(z|z_0) - \log q(z_0 | x)\right].
$$
However, using the Dirac Delta approximation for $q(z|z_0)$ and noting that $z = z_0 + \eta \nabla \log p(x,z_0)$ when using SGD and $T=1$, we arrive at the modified objective:
$$
 \mathbb{E}_{q(z_0|x)} \left[ \log p(x, z_0 + \eta \nabla \log p(x,z_0) ) - \log q(z_0 | x)\right]
$$
which is equivalent to the refined ELBO introduced in (\ref{eq:ELBO}). Since we are perturbing the latent variables in the steepest ascent direction, it is straightforward to show that, for moderate $\eta$, the previous bound is tighter than the one, for the original variational guide $q(z_0 | x)$, $\mathbb{E}_{q(z_0|x)} \left[ \log p(x, z_0  ) - \log q(z_0 | x)\right]$. This reformulation of ELBO is also convenient since it provides a clear way of implementing our refined variational inference framework in any PPL supporting algorithmic differentiation.

Respectively, for the VIS-FP case we have that the deterministic flow from VIS-FP follows the same trajectories as SGLD: by standard results of MCMC samplers we have that
$$
KL(q_{\phi,\eta}(z|x) ||  p(z|x)) \leq KL(q_{0, \phi}(z|x) ||  p(z|x)).
$$

\subsection{Taylor expansion}\label{sec:taylor}

From the result in subsection \ref{sec:rewriting},  within the VIS framework, we optimize instead $\max_z \log p(x, z + \Delta z)$, where $\Delta z$ is one iteration of the sampler, i.e., $\Delta z = \eta \nabla \log p(x, z)$ in the SGD case (VIS-P), or $\Delta z = \eta \nabla (\log p(x, z) - \log q(z))$ in the VIS-FP case. For notational clarity, we resort to the case $T=1$, but a similar analysis can be straightforwardly done if more refinement steps are performed.

We may now perform a first-order Taylor expansion of the refined objective as
$$
\log p(x, z + \Delta z) \approx \log p(x, z) + (\Delta z)^\intercal \nabla \log p(x, z).
$$
Taking gradients of the first order approximation w.r.t. the latent variables $z$ we arrive at
$$
\nabla_z \log p(x,z) + \eta \nabla_z \log p(x,z)^\intercal \nabla_z^2 \log p(x,z),
$$
where we have not computed the gradient through the $\Delta z$ term. That is, the \emph{refined gradient} can be deemed as the original gradient plus a second order correction. Instead of being modulated by a constant learning rate, this correction is adapted by the chosen sampler. In the experiments in Section \ref{sec:exp} we show that this is beneficial for the optimization as it can take less iterations to achieve lower losses. By further taking gradients through the $\Delta z$ term, we may tune the sampler parameters such as the learning rate as described in Section \ref{sec:tuning}. Consequently, the next subsection describes both modes of differentiation.


\subsection{Two modes of Automatic Differentiation for the refined ELBO optimization}\label{sec:AD}

Here we describe how to implement two variants of the ELBO objective. First, we define a \emph{stop gradient} operator\footnote{corresponds to \texttt{detach} in Pytorch or \texttt{stop\_gradient} in tensorflow.}  $\bot$ that sets the gradient of its operand to zero, i.e., $\nabla_x \bot (x) = 0$ whereas in the forward pass it acts as the identity function, that is, $\bot (x) = x$. Then, the two variants of the ELBO objective are
\begin{equation}
  \tag{Full AD}
   \mathbb{E}_q \left[ \log p(x, z + \Delta z) - \log q(z + \Delta z | x) \right]
\end{equation} and
\begin{equation}
  \tag{Fast AD}
  \mathbb{E}_q \left[ \log p(x, z + \bot (\Delta z)) - \log q(z + \bot(\Delta z) | x) \right].
\end{equation}
The Full AD ELBO makes it possible to further compute a gradient wrt sampler parameters inside $\Delta z$ at the cost of a slight increase in the computational burden. However, Fast AD variant may be handy in multiple scenarios as we will illustrate in the initial experiments.

\paragraph{Complexity.} Since we need to back propagate through $T$ iterations of an SG-MCMC scheme, using standard results of meta-learning and automatic differentiation \cite{franceschi2017forward}, the time complexity of our more intensive approach (full-AD) is $\mathcal{O}(mT)$, where $m$ is the dimension of the hyperparameters (the learning rate of SG-MCMC and the latent dimension). Since for most use cases the hyperparameters lie on a low-dimensional space, the approach is scalable.

\subsection{Connections with related approaches}

\paragraph{Coupled Variational Bayes (CVB) \cite{dai2018coupled}.}
 In this approach, optimization is in the dual space where we just optimize the standard ELBO. Though if the optimization was exact the solutions would be the same, it is not clear yet what happens in the truncated optimization case (finite $T$), other than performing empirical experiments on given datasets. We thus feel that there is room for implicit methods that perform optimization in the primal space (also they are easier to implement, in a PPL for example). The previous \emph{dual optimization} approach requires the use of an additional neural network (see the CVB paper or \cite{fang2019implicit}). This adds a large amount of parameters and another architecture decision. With VIS we do not need to introduce an auxiliary network, since we perform a "non-parametric" approach by backpropagating instead through $T$ iterations of SGLD. Thus, the only parameters we introduce are the sampler hyperparameters (the step-size in the SGLD case). Also, the lack of an auxiliary network simplifies the design choices.

\paragraph{Contrastive Divergence (CD) \cite{pmlr-v97-ruiz19a}.}

Apart from optimizing the reverse KL divergence (better studied than the divergence  use), the main point is that we can compute gradients wrt sampler parameters $\eta$ (see Section \ref{sec:tuning}), whereas in \cite{pmlr-v97-ruiz19a} the authors only consider a sampler $Q(z|z_0)$: our framework allows for greater flexibility, helping the user in tuning the sampler hyperparameters.







\section{EXPERIMENTS}\label{sec:exps}

We first detail the experiments. We emphasize that our framework permits rapid iterations over a large class of models. Through the following experiments, we aim to shed light on the following questions:
\begin{description}
    \item[Q1] Is the increased computational complexity of computing gradients through sampling steps worth the flexibility gains?
    \item[Q2] Is the proposed framework compatible with other structured inference techniques, such as the sum-product algorithm?
    \item[Q3] Does the more flexible posterior approximated by VIS help in auxiliary tasks, such as decision making or classification?
\end{description}

Within the spirit of reproducible research, the code is released at 
\url{https://github.com/vicgalle/vis}. 
The VIS framework was implemented using Pytorch \cite{paszke2017automatic}, though we also release a notebook for the first experiment using Jax to highlight the simple implementation of the VIS framework.

\subsection{Funnel density}

As a preliminary experiment, we test the VIS framework on a synthetic yet complex target distribution. The target bi-dimensional density is defined through:
\begin{align*}
    z_1 &\sim \mathcal{N}(0, 1.35) \\
    z_2 &\sim \mathcal{N}(0, \exp(z_1)).
\end{align*}
As a variational approximation, we take the usual diagonal Gaussian distribution. For the VIS case, we refine it for $T = 1$ steps using SGLD. Results are shown in Figure \ref{fig:funnel}. In the top, we show the trajectories of the lower bound for up to 50 iterations of variational optimization with Adam. It is clear that our refined version achieves a tighter bound. The middle and bottom figures present the contour curves of the learned variational approximations. The VIS variant is placed nearer to the mean of the true distribution and is more disperse than the original variational approximation, confirming the fact that the refinement step helps in attaining more flexible posterior approximations.

\begin{figure}[!htb]
\begin{center}
\minipage{0.249\textwidth}
\hspace{-0.5em}
  \includegraphics[width=\linewidth]{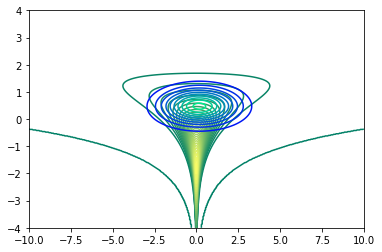}
\endminipage
\minipage{0.249\textwidth}
  \includegraphics[width=\linewidth]{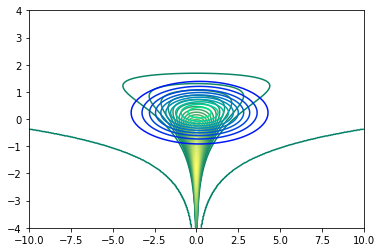}
\endminipage\hfill
\minipage{0.249\textwidth}%
  \includegraphics[width=\linewidth]{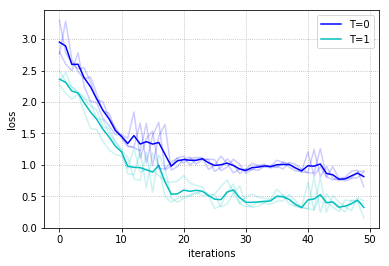}
\endminipage
\end{center}
\caption{Bottom: evolution of the negative ELBO loss objective through 50 iterations. Darker lines depict the mean along different seeds (lighter lines). Top left: contour curves (blue-turquoise) of the variational approximation with no refinement ($T=0$) at iteration 30 (loss of $1.011$). Top right: contour curves (blue-turquoise) of the refined variational approximation ($T=1$) at iteration 30 (loss of $0.667$). Green-yellow curves denote the target density.}\label{fig:funnel}
\end{figure}

\subsection{State-space  Markov models}

We test our variational approximation on two state-space models, one for discrete data and the other for continuous observations. All the experiments in this subsection use the Fast AD version from Section \ref{sec:AD} since it was not necessary to further tune the sampler parameters to have competitive results.

\noindent\textbf{Hidden Markov Model (HMM)}. The model equations are given by
$$
p(z_{1:\tau} , x_{1:\tau}, \theta) = \prod_{t=1}^\tau p(x_t|z_t,\theta_{em})p(x_t|x_{t-1},\theta_{tr})p(\theta),
$$
where each conditional is a Categorical distribution which takes $5$ different classes and the prior $p(\theta) = p(\theta_{em})p(\theta_{tr})$ are two Dirichlet distributions that sample the emission and transition probabilities, respectively. We perform inference on the parameters $\theta$.

\noindent\textbf{Dynamic Linear Model (DLM)}. The model equations are the same as in the HMM case, though the conditional distributions are now Gaussian and the parameters $\theta$ refer to the emission and transition variances. As before, we perform inference over $\theta$. \\

The full model implementations can be checked in Appendix (Supp. Material) \ref{app:ss}, based on \texttt{funsor}\footnote{\url{https://github.com/pyro-ppl/funsor/}}, a PPL on top of the \texttt{Pytorch} autodiff framework. For each model, we generate a synthetic dataset, and use the refined variational approximation with $T = 0, 1, 2$. As the original variational approximation to the parameters $\theta$ we use a Dirac Delta. Performing VI with this approximation corresponds to MAP estimation using the Kalman filter in the DLM case \cite{zarchan2013fundamentals} and the Baum-Welch algorithm in the HMM case \cite{rabiner1989tutorial}, since we marginalize out the latent variables $z_{1:\tau}$. Model details are given in Appendix (Supp. Material) \ref{app:hmm}. Figure \ref{fig:ss} shows the results. The first row reports the experiments related to the HMM; the second one to the DLM. While in all graphs we report the evolution of the loglikelihood during inference, in the first column we report the number of rELBO iterations, whereas in the second column we measure wall-clock time as the optimization takes place. We confirm that VIS ($T>0$) achieve better results than regular optimization with VI ($T=0$) for a similar amount of time.

\begin{figure}[!htb]
\minipage{0.24\textwidth}
  \includegraphics[width=\linewidth]{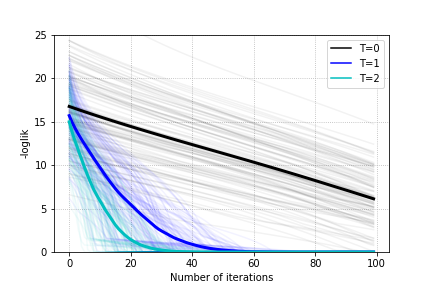}
\endminipage
\minipage{0.24\textwidth}
  \includegraphics[width=\linewidth]{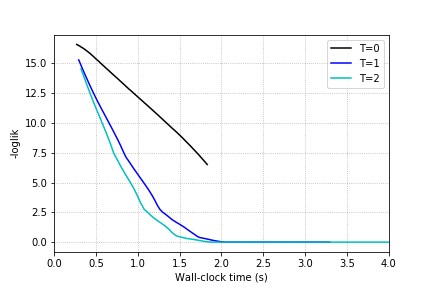}
\endminipage\hfill
\minipage{0.24\textwidth}%
  \includegraphics[width=\linewidth]{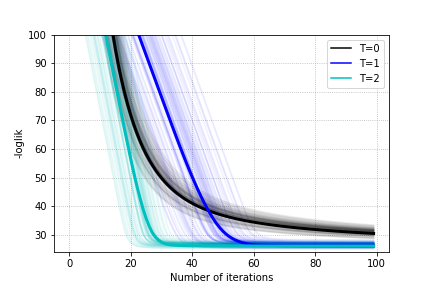}
 \endminipage
 \minipage{0.24\textwidth}%
  \includegraphics[width=\linewidth]{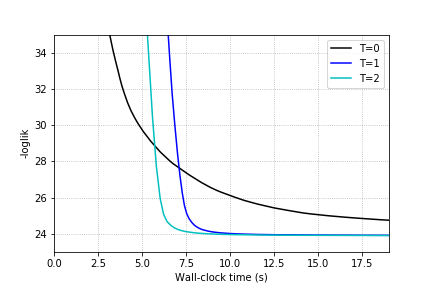}
\endminipage
\caption{Results of rELBO optimization for state-space models. Top left (HMM): -loglikelihood against number of rELBO gradient iterations. Top right (HMM): -loglikelihood against wall-clock time. Bottom left (DLM): -loglikelihood against number of rELBO gradient iterations. Bottom right (DLM): -loglikelihood against number of rELBO gradient iterations}\label{fig:ss}
\end{figure}
    
\subsubsection{Prediction tasks in a HMM} With the aim of assessing whether rELBO optimization helps in attaining better auxiliary scores, we also report results on a prediction task. We generate a synthetic time series of alternating 0 and 1 for $\tau=105$ timesteps. We train the HMM model from before on the first 100 points, and report in Table \ref{tbl:preds} the accuracy of the predictive distribution $p(y_t)$ averaged over the last 5 time-steps. We also report the predictive entropy since it helps in assessing the confidence of the model in its forecast and is a strictly proper scoring rule \cite{gneiting2007strictly}. To guarantee the same computational budget time and a fair comparison, the model without refining is run for 50 epochs, whereas the model with refinement is run for 20 epochs. We see that the refined model achieves higher accuracy than its counterpart; in addition, it is correctly more confident in its predictions.
\begin{table}[!ht]
\centering
\caption{Prediction metrics for the HMM.}\label{tbl:preds}
\begin{tabular}{lcc}
\cline{1-3}
   & $T=0$                             & $T=1$   \\ 
 \cline{1-3}
    accuracy          & $0.40$ &  $\bm{0.84}$ \\
    predictive entropy          & $1.414$ &  $\bm{1.056}$ \\
    logarithmic score   & $-1.044$ & $\bm{-0.682}$ \\
 \cline{1-3}
\end{tabular}
\end{table}

\subsubsection{Prediction task in a DLM}

We now test the VIS framework on the Mauna Loa monthly $CO_2$ time series data \cite{keeling2005atmospheric}. As the training set, we take the first 10 years, and we evaluate over the next 2 years. We use a DLM composed of a local linear trend plus a seasonality block of periodicity 12. Full model specification can be checked in Appendix (Supp. Material) \ref{app:ss}. As a preprocessing step, we standardize the time series to zero mean and unitary deviation. To guarantee the same computational budget time, the model without refining is run for 10 epochs, whereas the model with refinement is run for 4 epochs. We report mean absolute error (MAE) and predictive entropy in Table \ref{tbl:preds_dlm}. In addition, we compute the interval score as defined in \cite{gneiting2007strictly}, a strictly proper scoring rule. As can be seen, for similar wall-clock times, the refined model not only achieves lower MAE, but also its predictive intervals are narrower than the non-refined counterpart.

\begin{table}[!ht]
\centering
\caption{Prediction metrics for the DLM.}\label{tbl:preds_dlm}
\begin{tabular}{lcc}
\cline{1-3}
   & $T=0$                             & $T=1$   \\ 
 \cline{1-3}
    MAE          & $0.270$ &  $\bm{0.239}$ \\
    predictive entropy          & $2.537$ &  $\bm{2.401}$ \\
    interval score ($\alpha=0.05$) & $15.247$ & $\bm{13.461}$\\
 \cline{1-3}
\end{tabular}
\end{table}

\subsection{Variational Autoencoder}

The third batch of experiments aims to check whether the VIS framework is competitive with respect to other algorithms from the recent literature. To this end, we test our approach with a Variational Autoencoder (VAE) model \cite{kingma2013auto}. 
The VAE defines a conditional distribution $p_{\theta}(x | z)$, generating an observation $x$ from a latent variable $z$. For this task, we are interested in modelling two $28 \times 28$ image distributions, MNIST and fashion-MNIST. To perform inference (learn parameters $\theta$) the VAE introduces a variational approximation $q_{\phi}(z | x)$. In the standard setting, this distribution is Gaussian; we instead use the refined variational approximation comparing various values of $T$. We used the MC approximation, though achieved similar results using the Gaussian one. We also use the Full AD variant from Section \ref{sec:AD}.

As experimental setup, we reproduce the setting from \cite{pmlr-v89-titsias19a}. As model $p_{\theta}(x | z)$, we use a factorized Bernoulli distribution parameterized with a two layer feed-forward network with 200 units in each layer and relu activation, except for the final sigmoid activation. As variational approximation $q_{\phi}(z | x)$, we use a Gaussian whose mean and (diagonal) covariance matrix are parameterized by two separate neural networks with the same structure as the previous one, except the sigmoid activation for the mean and a softplus activation for the covariance matrix.

\begin{table}[!ht]
\centering
\caption{Test log-likelihood on binarized MNIST and fMNIST. VIS-$X$-$Y$ denotes $T = X$ refinement iterations during training and $T=Y$ refinement iterations during testing.}\label{tbl:vae}
\begin{tabular}{lcc}
\cline{1-3}
\textbf{Method}   & \textbf{MNIST}                             & \textbf{fMNIST}   \\ \cline{1-3}
 \multicolumn{3}{c}{\small Results from \cite{pmlr-v89-titsias19a}}       \\
    UIVI          & $-94.09$ &  $-110.72$ \\
    SIVI          & $-97.77$ &  $-121.53$ \\
    VAE          & $-98.29$ &  $-126.73$ \\
    \cline{1-3}
 \multicolumn{3}{c}{\small Results from \cite{pmlr-v97-ruiz19a}}       \\
    VCD          & $-95.86$ &  $-117.65$ \\
    HMC-DLGM & $-96.23$ & $-117.74$ \\ 
    \cline{1-3}
    \multicolumn{3}{c}{\small This paper}       \\
    VIS-5-10     & $\bm{-82.74 \pm 0.19}$ & $\bm{-105.08 \pm 0.34}$  \\
    VIS-0-10     & $-96.16 \pm 0.17$ & $-120.53 \pm 0.59$  \\
    VAE (VIS-0-0)              & $-100.91 \pm 0.16$ & $-125.57 \pm 0.63$ \\
\end{tabular}
\end{table}
Results are reported in Table \ref{tbl:vae}. To guarantee a fair comparison, we trained the VIS-5-10 variant for 10 epochs, whereas all the other variants were trained for 15 epochs (fMNIST) or 20 epochs (MNIST), so that the VAE performance is comparable to the one reported in \cite{pmlr-v89-titsias19a}. Although VIS is trained for less epochs, by increasing the number $T$ of MCMC iterations, we dramatically improve on test log-likelihood. In terms of computational complexity, the average time per epoch using $T=5$ is 10.46 s, whereas with no refinement ($T=0$) is 6.10 s (hence our decision to train the refined variant for less epochs): a moderate increase in computing time may be worth the dramatic increase in log-likelihood while not introducing new parameters in the model, except for the learning rate $\eta$. We also show the results from the contrastive divergence approach from \cite{pmlr-v97-ruiz19a} and the HMC variant from \cite{hoffman2017learning}, showing that our framework can outperform those approaches in similar experimental settings. Finally, as a visual inspection of the quality of reconstruction from the VAE trained with the VIS framework, Figure \ref{fig:reco} displays ten random samples of reconstructed digit images.
\begin{figure}[!ht]
  \begin{center}
\includegraphics[width=\linewidth]{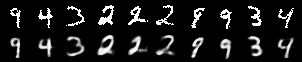}
\end{center}
  \caption{Top row: original images. Bottom row: reconstructed images using VIS-5-10 at 10 epochs.}\label{fig:reco}
\end{figure}

\subsection{Variational Autoencoder as a deep Bayes Classifier}\label{sec:exp}
With the final experiments we show that the VIS framework can deal with more general probabilistic graphical models.
Influence diagrams \cite{howard2005influence} are one of the most popular representations of a decision analysis problem. There is a long history on bridging the gap between influence diagrams and probabilistic graphical models (see \cite{doi:10.1287/opre.36.4.589}, for instance), so developing better tools for Bayesian inference can be transferred to solve influence diagrams.

We showcase the flexibility of the proposed scheme to solve inference problems in an experiment with a classification task in a high-dimensional setting. 
We use the MNIST dataset.
More concretely, we extend the VAE model to condition it on a discrete variable $y \in \mathcal{Y} = \lbrace 0, 1, \ldots, 9 \rbrace$, leading to the conditional VAE (cVAE). A cVAE defines a decoder distribution $p_\theta(x | z, y)$ on an input space $x \in \mathbb{R}^D$ given class label $y \in \mathcal{Y}$ and latent variable $z \in \mathbb{R}^d$. To perform inference, a variational posterior is learned as an encoder $q_\phi(z|x,y)$ from a prior $p(z) \sim \mathcal{N}(0, I)$.
Leveraging the conditional structure on $y$, we use the generative model as a classifier using Bayes rule:
\begin{align}\label{eq:mc_cvae}
p(y|x) \propto p(y)p(x|y) &= p(y) \int p_\theta(x|z,y)q_\phi(z|x,y)dz \ \nonumber \\ &\approx  \frac{1}{K} \sum_{k=1}^K p_\theta (x | z^{(k)}, y)p(y) 
\end{align}
where we use $K$ Monte Carlo samples $z^{(k)} \sim q_\phi(z|x,y)$. In the experiments we set $K = 5$. Given a test sample $x$, the label $\hat{y}$ with highest probability $p(y|x)$ is predicted.
Figure \ref{fig:deep_bayes} in Appendix (Supp. Material) depicts the corresponding influence diagram. Additional details regarding the model architecture and hyperparameters can be found in Appendix (Supp. Material)
\ref{sec:detail}.

For comparison, we perform various experiments changing $T$ for the transition distribution $Q_{\eta, T}$ in the refined variational approximation. 
Results are in Table \ref{tab1}. We report the test accuracy achieved at the end of training. Note that we are comparing different values of $T$ depending on being on the training or testing phases (in the latter, the model and variational parameters are kept frozen). The model with $T_{tr} = 5$ was trained for 10 epochs, whereas the other settings for 15 epochs, to give all settings similar training times.  Results are averaged from 3 runs with different random seeds. From the results, it is clear that the effect of using the refined variational approximation (the cases when $T > 0$) is crucially beneficial to achieve higher accuracy. The effect of learning a good initial distribution and inner learning rate by using the gradients $\nabla_{\phi} \mbox{rELBO}(q)$ and $\nabla_{\eta} \mbox{rELBO}(q)$ has a highly positive impact in the accuracy obtained.

On a final note, we have not included the case when only using a SGD or SGLD sampler (i.e., without learning an initial distribution $q_{0, \phi} (z|x)$) since the results were much worse than the ones in Table \ref{tab1}, for a comparable computational budget. This strongly suggests that for inference in high-dimensional, continuous latent spaces, learning a good initial distribution through VIS can dramatically accelerate mixing time.

\begin{table}
\caption{Results on  digit classification task using a deep Bayes classifier.}\label{tab1}
\centering
\begin{tabular}{llc}
\hline
$T_{tr}$ &  $T_{te}$ & Acc. (test) \\
\hline
0 & 0  & $96.5 \pm 0.5$ \% \\ 
0 & 10 &  $97.7 \pm 0.7$ \%\\
5 & 10 & $ \mathbf{99.8 \pm 0.2}$ \% 
\end{tabular}
\end{table} 

\section{CONCLUSION}

We have proposed a flexible and efficient framework to perform inference in probabilistic programs. We have shown that the scheme can be easily implemented under the probabilistic programming paradigm and used to efficiently perform inference in a wide class of models: state space time series, variational autoencoders and influence diagrams, defined with continuous, high-dimensional distributions.

Our framework can be seen as a general way of tuning MCMC sampler parameters, adapting the initial distributions and learning rate, Section \ref{sec:main}. 
Key to the success and applicability of the VIS framework are the ELBO approximations of the refined variational approximation introduced in Section \ref{sec:approx}, which are computationally cheap but convenient. Better estimates of the refined density and its gradient may be a fruitful line of research, such as the spectral estimator from \cite{shi2018spectral}. Of independent interest to deal with the implicit variational density, it may be worthwhile to consider optimizing the Fenchel dual of the KL divergence, as done recently in \cite{fang2019implicit}. However, this requires the use of an auxiliary neural network, which is a large computational price to pay compared with our lighter particle approximation.


\textbf{Acknowledgments}
VG acknowledges support from grant FPU16-05034. DRI is grateful to the MINECO MTM2017-86875-C3-1-R project and the AXA-ICMAT Chair in Adversarial Risk Analysis. 
All authors acknowledge support from the Severo Ochoa Excellence Programme SEV-2015-0554. 
This material was based upon work partially supported by the National Science Foundation under Grant DMS-1638521 to the Statistical and Applied Mathematical Sciences Institute. 


\bibliographystyle{unsrt}
{\small
\bibliography{plan}

\begin{thebibliography}{10}

\bibitem{lunn2000winbugs}
David~J Lunn, Andrew Thomas, Nicky Best, and David Spiegelhalter.
\newblock Winbugs-a bayesian modelling framework: concepts, structure, and
  extensibility.
\newblock {\em Statistics and computing}, 10(4):325--337, 2000.

\bibitem{carpenter2017stan}
Bob Carpenter, Andrew Gelman, Matthew~D Hoffman, Daniel Lee, Ben Goodrich,
  Michael Betancourt, Marcus Brubaker, Jiqiang Guo, Peter Li, and Allen
  Riddell.
\newblock Stan: A probabilistic programming language.
\newblock {\em Journal of statistical software}, 76(1), 2017.

\bibitem{tran2018simple}
Dustin Tran, Matthew~W Hoffman, Dave Moore, Christopher Suter, Srinivas
  Vasudevan, and Alexey Radul.
\newblock Simple, distributed, and accelerated probabilistic programming.
\newblock In {\em Advances in Neural Information Processing Systems}, pages
  7609--7620, 2018.

\bibitem{bingham2018pyro}
Eli Bingham, Jonathan~P Chen, Martin Jankowiak, Fritz Obermeyer, Neeraj
  Pradhan, Theofanis Karaletsos, Rohit Singh, Paul Szerlip, Paul Horsfall, and
  Noah~D Goodman.
\newblock Pyro: Deep universal probabilistic programming.
\newblock {\em arXiv preprint arXiv:1810.09538}, 2018.

\bibitem{kingma2013auto}
Diederik~P Kingma and Max Welling.
\newblock Auto-encoding variational bayes.
\newblock {\em arXiv preprint arXiv:1312.6114}, 2013.

\bibitem{rabiner1989tutorial}
Lawrence~R Rabiner.
\newblock A tutorial on hidden markov models and selected applications in
  speech recognition.
\newblock {\em Proceedings of the IEEE}, 77(2):257--286, 1989.

\bibitem{andrieu2010particle}
Christophe Andrieu, Arnaud Doucet, and Roman Holenstein.
\newblock Particle {M}arkov chain monte carlo methods.
\newblock {\em Journal of the Royal Statistical Society: Series B (Statistical
  Methodology)}, 72(3):269--342, 2010.

\bibitem{neal2011mcmc}
Radford~M Neal et~al.
\newblock Mcmc using hamiltonian dynamics.
\newblock {\em Handbook of Markov Chain Monte Carlo}, 2(11):2, 2011.

\bibitem{kucukelbir2017automatic}
Alp Kucukelbir, Dustin Tran, Rajesh Ranganath, Andrew Gelman, and David~M Blei.
\newblock Automatic differentiation variational inference.
\newblock {\em The Journal of Machine Learning Research}, 18(1):430--474, 2017.

\bibitem{nalisnick2016approximate}
Eric Nalisnick, Lars Hertel, and Padhraic Smyth.
\newblock Approximate inference for deep latent gaussian mixtures.
\newblock 2016.

\bibitem{salimans2015markov}
Tim Salimans, Diederik Kingma, and Max Welling.
\newblock Markov chain monte carlo and variational inference: Bridging the gap.
\newblock In {\em International Conference on Machine Learning}, pages
  1218--1226, 2015.

\bibitem{tran2015variational}
Dustin Tran, Rajesh Ranganath, and David~M Blei.
\newblock The variational gaussian process.
\newblock {\em arXiv preprint arXiv:1511.06499}, 2015.

\bibitem{wood2014new}
Frank Wood, Jan~Willem Meent, and Vikash Mansinghka.
\newblock A new approach to probabilistic programming inference.
\newblock In {\em Artificial Intelligence and Statistics}, pages 1024--1032,
  2014.

\bibitem{ge2018t}
Hong Ge, Kai Xu, and Zoubin Ghahramani.
\newblock Turing: a language for flexible probabilistic inference.
\newblock In {\em International Conference on Artificial Intelligence and
  Statistics, {AISTATS} 2018, 9-11 April 2018, Playa Blanca, Lanzarote, Canary
  Islands, Spain}, pages 1682--1690, 2018.

\bibitem{papaspiliopoulos2007general}
Omiros Papaspiliopoulos, Gareth~O Roberts, and Martin Sk{\"o}ld.
\newblock A general framework for the parametrization of hierarchical models.
\newblock {\em Statistical Science}, pages 59--73, 2007.

\bibitem{hoffmanneutra}
Matthew~D Hoffman, Pavel Sountsov, Joshua Dillon, Ian Langmore, Dustin Tran,
  and Srinivas Vasudevan.
\newblock Neutra-lizing bad geometry in hamiltonian monte carlo using neural
  transport.
\newblock 2018.

\bibitem{PhysRevLett.121.260601}
Shuo-Hui Li and Lei Wang.
\newblock Neural network renormalization group.
\newblock {\em Phys. Rev. Lett.}, 121:260601, Dec 2018.

\bibitem{parno2014transport}
Matthew Parno and Youssef Marzouk.
\newblock Transport map accelerated markov chain monte carlo.
\newblock {\em arXiv preprint arXiv:1412.5492}, 2014.

\bibitem{rezende2015variational}
Danilo Rezende and Shakir Mohamed.
\newblock Variational inference with normalizing flows.
\newblock In {\em International Conference on Machine Learning}, pages
  1530--1538, 2015.

\bibitem{chen2018continuoustime}
Changyou Chen, Chunyuan Li, Liqun Chen, Wenlin Wang, Yunchen Pu, and Lawrence
  Carin.
\newblock Continuous-time flows for efficient inference and density estimation,
  2018.

\bibitem{goodfellow2014generative}
Ian Goodfellow, Jean Pouget-Abadie, Mehdi Mirza, Bing Xu, David Warde-Farley,
  Sherjil Ozair, Aaron Courville, and Yoshua Bengio.
\newblock Generative adversarial nets.
\newblock In {\em Advances in neural information processing systems}, pages
  2672--2680, 2014.

\bibitem{huszar2017variational}
Ferenc Husz{\'a}r.
\newblock Variational inference using implicit distributions.
\newblock {\em arXiv preprint arXiv:1702.08235}, 2017.

\bibitem{pmlr-v89-titsias19a}
Michalis~K Titsias and Francisco Ruiz.
\newblock Unbiased implicit variational inference.
\newblock In {\em The 22nd International Conference on Artificial Intelligence
  and Statistics}, pages 167--176, 2019.

\bibitem{yin2018semi}
Mingzhang Yin and Mingyuan Zhou.
\newblock Semi-implicit variational inference.
\newblock {\em arXiv preprint arXiv:1805.11183}, 2018.

\bibitem{hoffman2017learning}
Matthew~D Hoffman.
\newblock Learning deep latent gaussian models with markov chain monte carlo.
\newblock In {\em Proceedings of the 34th International Conference on Machine
  Learning-Volume 70}, pages 1510--1519. JMLR. org, 2017.

\bibitem{feng2017learning}
Yihao Feng, Dilin Wang, and Qiang Liu.
\newblock Learning to draw samples with amortized stein variational gradient
  descent.
\newblock {\em arXiv preprint arXiv:1707.06626}, 2017.

\bibitem{cremer2018inference}
Chris Cremer, Xuechen Li, and David Duvenaud.
\newblock Inference suboptimality in variational autoencoders.
\newblock {\em arXiv preprint arXiv:1801.03558}, 2018.

\bibitem{pmlr-v97-ruiz19a}
Francisco Ruiz and Michalis Titsias.
\newblock A contrastive divergence for combining variational inference and
  mcmc.
\newblock In {\em International Conference on Machine Learning}, pages
  5537--5545, 2019.

\bibitem{welling2011bayesian}
Max Welling and Yee~W Teh.
\newblock Bayesian learning via stochastic gradient langevin dynamics.
\newblock In {\em Proceedings of the 28th International Conference on Machine
  Learning (ICML-11)}, pages 681--688, 2011.

\bibitem{li2016preconditioned}
Chunyuan Li, Changyou Chen, David Carlson, and Lawrence Carin.
\newblock Preconditioned stochastic gradient langevin dynamics for deep neural
  networks.
\newblock In {\em Thirtieth AAAI Conference on Artificial Intelligence}, 2016.

\bibitem{li2016high}
Chunyuan Li, Changyou Chen, Kai Fan, and Lawrence Carin.
\newblock High-order stochastic gradient thermostats for bayesian learning of
  deep models.
\newblock In {\em Thirtieth AAAI Conference on Artificial Intelligence}, 2016.

\bibitem{abbati2018adageo}
Gabriele Abbati, Alessandra Tosi, Michael Osborne, and Seth Flaxman.
\newblock Adageo: Adaptive geometric learning for optimization and sampling.
\newblock In {\em International Conference on Artificial Intelligence and
  Statistics}, pages 226--234, 2018.

\bibitem{gallego2018stochastic}
Victor Gallego and David~Rios Insua.
\newblock Stochastic gradient mcmc with repulsive forces.
\newblock {\em arXiv preprint arXiv:1812.00071}, 2018.

\bibitem{mandt2017stochastic}
Stephan Mandt, Matthew~D Hoffman, and David~M Blei.
\newblock Stochastic gradient descent as approximate bayesian inference.
\newblock {\em The Journal of Machine Learning Research}, 18(1):4873--4907,
  2017.

\bibitem{liu2016stein}
Qiang Liu and Dilin Wang.
\newblock Stein variational gradient descent: A general purpose bayesian
  inference algorithm.
\newblock In {\em Advances In Neural Information Processing Systems}, pages
  2378--2386, 2016.

\bibitem{duvenaud2016early}
David Duvenaud, Dougal Maclaurin, and Ryan Adams.
\newblock Early stopping as nonparametric variational inference.
\newblock In {\em Artificial Intelligence and Statistics}, pages 1070--1077,
  2016.

\bibitem{kingma2014adam}
Diederik~P Kingma and Jimmy Ba.
\newblock Adam: A method for stochastic optimization.
\newblock {\em arXiv preprint arXiv:1412.6980}, 2014.

\bibitem{franceschi2017forward}
Luca Franceschi, Michele Donini, Paolo Frasconi, and Massimiliano Pontil.
\newblock Forward and reverse gradient-based hyperparameter optimization.
\newblock In {\em Proceedings of the 34th International Conference on Machine
  Learning-Volume 70}, pages 1165--1173. JMLR. org, 2017.

\bibitem{dai2018coupled}
Bo~Dai, Hanjun Dai, Niao He, Weiyang Liu, Zhen Liu, Jianshu Chen, Lin Xiao, and
  Le~Song.
\newblock Coupled variational bayes via optimization embedding.
\newblock In {\em Advances in Neural Information Processing Systems}, pages
  9690--9700, 2018.

\bibitem{fang2019implicit}
Le~Fang, Chunyuan Li, Jianfeng Gao, Wen Dong, and Changyou Chen.
\newblock Implicit deep latent variable models for text generation.
\newblock In {\em Proceedings of the 2019 Conference on Empirical Methods in
  Natural Language Processing and the 9th International Joint Conference on
  Natural Language Processing (EMNLP-IJCNLP)}, pages 3937--3947, 2019.

\bibitem{paszke2017automatic}
Adam Paszke, Sam Gross, Soumith Chintala, Gregory Chanan, Edward Yang, Zachary
  DeVito, Zeming Lin, Alban Desmaison, Luca Antiga, and Adam Lerer.
\newblock Automatic differentiation in pytorch.
\newblock 2017.

\bibitem{zarchan2013fundamentals}
Paul Zarchan and Howard Musoff.
\newblock {\em Fundamentals of Kalman filtering: a practical approach}.
\newblock American Institute of Aeronautics and Astronautics, Inc., 2013.

\bibitem{gneiting2007strictly}
Tilmann Gneiting and Adrian~E Raftery.
\newblock Strictly proper scoring rules, prediction, and estimation.
\newblock {\em Journal of the American Statistical Association},
  102(477):359--378, 2007.

\bibitem{keeling2005atmospheric}
Charles~D Keeling.
\newblock Atmospheric carbon dioxide record from mauna loa.
\newblock 2005.

\bibitem{howard2005influence}
Ronald~A Howard and James~E Matheson.
\newblock Influence diagrams.
\newblock {\em Decision Analysis}, 2(3):127--143, 2005.

\bibitem{doi:10.1287/opre.36.4.589}
Ross~D. Shachter.
\newblock Probabilistic inference and influence diagrams.
\newblock {\em Operations Research}, 36(4):589--604, 1988.

\bibitem{shi2018spectral}
Jiaxin Shi, Shengyang Sun, and Jun Zhu.
\newblock A spectral approach to gradient estimation for implicit
  distributions.
\newblock In {\em International Conference on Machine Learning}, pages
  4651--4660, 2018.

\end{thebibliography}
}

\newpage
\clearpage

\appendix

 \section{Fokker-Planck approximation (VIS-FP)}\label{app:fp}
  
  The Fokker-Planck equation is a PDE that describes the temporal evolution of the density of a random variable under a (stochastic) gradient flow. For a given SDE
$$
dz = \mu(z,t)dt + \sigma(z,t)dB_t,
$$
the corresponding Fokker-Planck equation is
$$
\frac{\partial}{\partial t} q_t(z) = -\frac{\partial}{\partial z}\left[ \mu(z,t)q_t(z)\right] + \frac{\partial^2}{\partial z^2} \left[ \frac{\sigma^2(z,t)}{2} q_t(z) \right].
$$

As an example, we are interested in converting the SGLD dynamics to a deterministic gradient flow (that is, we want to convert a SDE into an ODE such that both gradient flows have the same Fokker-Planck equation). 

\begin{proposition}
The SGLD dynamics, given by the following SDE:
$$
dz = \nabla \log p(z)dt + \sqrt{2}dB_t,
$$
have an equivalent deterministic flow, written as the ODE
$$
dz = (\nabla \log p(z) - \nabla \log q_t (z))dt.
$$
\end{proposition}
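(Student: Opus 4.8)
The plan is to show that the two dynamics induce \emph{the same Fokker-Planck equation}, so that they share an identical evolution of the marginal density $q_t(z)$; this is the precise sense in which the deterministic flow is ``equivalent'' to the stochastic one. First I would instantiate the general Fokker-Planck template given just above the proposition on the SGLD SDE, where the drift is $\mu(z,t)=\nabla\log p(z)$ and the diffusion coefficient is the constant $\sigma=\sqrt{2}$, so that $\sigma^2/2=1$. This yields
$$
\frac{\partial}{\partial t} q_t(z) = -\frac{\partial}{\partial z}\left[\nabla\log p(z)\, q_t(z)\right] + \frac{\partial^2}{\partial z^2} q_t(z).
$$
Next I would instantiate the same template on the proposed ODE, which has drift $\mu(z,t)=\nabla\log p(z)-\nabla\log q_t(z)$ and \emph{zero} diffusion ($\sigma=0$); since the second-order term drops, this is the continuity (Liouville) equation
$$
\frac{\partial}{\partial t} q_t(z) = -\frac{\partial}{\partial z}\left[\left(\nabla\log p(z)-\nabla\log q_t(z)\right) q_t(z)\right].
$$

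The heart of the argument is then a single rewriting that matches the two right-hand sides, using the log-derivative identity $\frac{\partial}{\partial z} q_t(z)=q_t(z)\,\nabla\log q_t(z)$. Applying it once more lets me convert the diffusion term of the SGLD equation into divergence form,
$$
\frac{\partial^2}{\partial z^2} q_t(z) = \frac{\partial}{\partial z}\left[q_t(z)\,\nabla\log q_t(z)\right].
$$
Substituting this back and collecting both flux terms under a common divergence turns the SGLD Fokker-Planck equation into exactly the continuity equation of the ODE, completing the identification.

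The step I expect to be the main obstacle is not the algebra but pinning down the correct \emph{interpretation} of the statement. The proposition is written with scalar derivatives $\partial/\partial z$, but the intended setting is $z\in\mathbb{R}^d$, so I would be explicit that $\partial/\partial z$ plays the role of a divergence acting on vector fields and $\partial^2/\partial z^2$ the role of the Laplacian, and verify the identity $\nabla\cdot\nabla q_t=\nabla\cdot(q_t\nabla\log q_t)$ holds componentwise. The second subtlety worth flagging is that the resulting ODE is \emph{self-referential}: its drift depends on the very density $q_t$ it transports, so it is a mean-field (McKean--Vlasov-type) flow rather than an ordinary ODE, and ``equivalent'' must be read as ``produces the same one-time marginals,'' not as pathwise equality of trajectories. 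I would also note the mild regularity assumptions (sufficient smoothness and decay of $q_t$ so that $\log q_t$ and its gradient are well defined and the integration-by-parts implicit in the Fokker-Planck derivation is valid) under which the manipulation is justified.
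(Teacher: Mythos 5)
Your proof is correct and follows essentially the same route as the paper: write the Fokker--Planck equation for the SGLD dynamics and the Liouville (continuity) equation for the deterministic flow, then identify them via the identity $\frac{\partial}{\partial z}\left[q_t(z)\,\nabla\log q_t(z)\right]=\frac{\partial^2}{\partial z^2}q_t(z)$. Your additional remarks on the divergence/Laplacian reading in $\mathbb{R}^d$, the mean-field (marginals-only) sense of equivalence, and the regularity assumptions are sound clarifications that the paper leaves implicit.
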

\begin{proof}
We write the Fokker-Planck equation for the respective flows. For the Langevin SDE, we have
$$
\frac{\partial}{\partial t} q_t(z) = - \frac{\partial}{\partial z} \bigg[ \nabla \log p(z) q_t(z) \bigg] + \frac{\partial^2}{\partial z^2} \bigg[ q_t(z) \bigg].
$$
On the other hand, the Fokker-Planck equation for the deterministic gradient flow is given by
$$
\frac{\partial}{\partial t} q_t(z) = - \frac{\partial}{\partial z} \bigg[ \nabla \log p(z) q_t(z)\bigg] + \frac{\partial}{\partial z} \bigg[ \nabla \log q_t(z) q_t(z)\bigg].
$$
The result immediately follows since $ \frac{\partial}{\partial z} \left[ \nabla \log q_t(z) q_t(z)\right] = \frac{\partial^2}{\partial z^2} \left[ q_t(z) \right]$.
\end{proof}

Given that both flows are equivalent, we restrict our attention to the deterministic one. Its discretization  leads to iterations of the form
\begin{equation}\label{eq:deterministic_flow}
z_{t+1} = z_t - \eta (\nabla \log p(z_t) - \nabla \log q_t (z_t)).
\end{equation}
In order to tackle the last term, we make the following particle approximation. Using a variational formulation, we have that
\begin{align*}
    - \nabla \log q(z) = \nabla \left( - \frac{\delta}{\delta q} \mathbb{E}_q \left[ \log q\right] \right).
\end{align*}
Then, we smoothen the true density $q$ convolving it with a kernel $K$, typically the rbf one, $K(z, z') = \exp \lbrace - \gamma \| z - z' \|^2 \rbrace$, where $\gamma$ is the bandwidth hyperparameter, leading to
\begin{align*}
    \nabla \left( - \frac{\delta}{\delta q} \mathbb{E}_q \left[ \log q\right] \right) &\approx
     \nabla \left( - \frac{\delta}{\delta q} \mathbb{E}_q \left[ \log (q\ast K )\right] \right) \\
     &= \nabla \log (q \ast K) - \nabla \left( \frac{q}{(q \ast K)} \ast K \right).
\end{align*}
If we consider a mixture of Dirac deltas, $q(z) = \frac{1}{K} \sum_{i=1}^K \delta(z - z_i)$, then the approximation is given as
$$
- \nabla \log q(z) \approx - \frac{\sum_k \nabla_{z_i} K(z_i, z_j)}{\sum_j K(z_i, z_j)}
- \sum_k \frac{\nabla_{z_i} K(z_i, z_k)}{\sum_j K(z_j, z_k)},
$$
which we can directly plug into Equation (\ref{eq:deterministic_flow}).
It is possible to backpropagate through Equation (\ref{eq:deterministic_flow}), i.e., the gradients of $K$ can be explicitly computed.

\section{Experiment details}\label{sec:detail}

\subsection{State-space models}\label{app:ss}

\subsubsection{Initial experiments}\label{app:hmm}
For the HMM, both the emission and transition probabilities are Categorical distributions, taking values in the domain $\lbrace 0, 1, 2, 3, 4 \rbrace$.

The equations of the DLM are given by
\begin{align*}
    z_{t+1} &\sim \mathcal{N}(0.5z_t + 1.0,\sigma_{tr}) \\
    x_{t} &\sim \mathcal{N}(3.0z_t + 0.5, \sigma_{em}).
\end{align*}
with $z_0  = 0.0$.


\subsubsection{Prediction task in a DLM}

The DLM model is comprised of a linear trend component plus a seasonal block of period 12. The trend is specified as
\begin{align*}
x_t &= \mu_t + \epsilon_t \qquad \epsilon_t \sim \mathcal{N}(0, \sigma_{obs}) \\
\mu_t &= \mu_{t-1} + \delta_{t-1} + \epsilon'_t \qquad \epsilon'_t \sim \mathcal{N}(0, \sigma_{level}) \\
\delta_t &= \delta_{t-1} + \epsilon''_t \qquad \epsilon''_t \sim \mathcal{N}(0, \sigma_{slope}).
\end{align*}

With respect to the seasonal component,
the main idea is to \emph{cycle the state}: suppose $\theta_t \in \mathbb{R}^p$, with $p$ being the seasonal period. Then, at each timestep, the model focuses on the first component of the state vector:
 $$( \underuparrow{\alpha_1}, \alpha_2, \ldots, \alpha_{p}) \xrightarrow{\text{next period}} ( \underuparrow{\alpha_2}, \alpha_3, \ldots, \alpha_{p}, \alpha_{1}).$$
Thus, we can specify the seasonal component via:
\begin{align*}
x_t &= F\theta_t + v_t \\
\theta_t &= G\theta_{t-1} + w_t
\end{align*}
where $F$ is a $p-$dimensional vector and $G$ is a $p\times p$ matrix such that

\begin{equation*}
G = \begin{bmatrix}
0 & 0 & \ldots & 0 & 1 \\
1 &	0 & & 0 & 0 \\
0 & 1 & & 0 & 0 \\
 & & \ddots & & \\
 0 & 0 & & 1 & 0
\end{bmatrix}
\end{equation*}
and $F = ( 1,0,\ldots, 0,0)$. \\

\subsection{VAE}

\subsubsection{Model details}
\begin{figure}[!htp]
\centering
\RecustomVerbatimEnvironment{Verbatim}{BVerbatim}{}
\inputminted[fontsize=\scriptsize]{python}{./arch_vae.tex}
\caption{Model architecture for the cVAE.}
\label{fig:arch_vae}
\end{figure}
The VAE model is implemented with PyTorch \cite{paszke2017automatic}. The prior distribution $p(z)$ for the latent variables $z \in \mathbb{R}^{10}$ is a standard factorized Gaussian. The decoder distribution $p_\theta(x|z)$ and the encoder distribution (initial variational approximation) $q_{0,\phi}(z|x)$ are parameterized by two feed-forward neural networks whose details can be checked in Figure \ref{fig:arch_vae}.

\subsubsection{Hyperparameter settings}
The optimizer Adam is used in all experiments, with a learning rate $\lambda=0.001$. We also set $\eta = 0.001$. We train for 15 epochs (fMNIST) and 20 epochs (MNIST), to achieve similar performance to the explicit VAE case in \cite{pmlr-v89-titsias19a}. For the VIS-5-10 setting, we train for only 10 epochs, to allow for a fair computational comparison (similar computing times).

\subsection{CVAE}


\begin{figure}[ht]
  \begin{center}
\begin{tikzpicture}[x=1.7cm,y=1.8cm] 
\node[latent]    (Z)      {$Z$} ; %
\node[obs, below=of Z, yshift=+0.5cm]                   (X)      {$X$} ; %
 \node[latent, right=of Z]    (Y)      {$Y$} ; %
 \node[utility, below=of Y, yshift=+0.5cm]                   (U)      {$U$} ; %
 \node[decision, below=of X, yshift=+0.5cm]                   (Yh)      {$\hat{Y}$} ; %


   \factor[below=of Z, , yshift=0.3cm]     {Z0-f}     {left:$\theta$} {Z} {X} ; %
    \factor[left=of Y, , yshift=-0.9cm]     {Y0-f}     {left:$\theta$} {Y} {X} ; %

\edge {X} {Yh};
\edge {Y} {U};
\edge {Yh} {U};

\end{tikzpicture}
\end{center}
  \caption{Influence Diagram for the deep Bayes classifier.}\label{fig:deep_bayes}
\end{figure}

\subsubsection{Model details}
\begin{figure}[!htp]
\centering
\RecustomVerbatimEnvironment{Verbatim}{BVerbatim}{}
\inputminted[fontsize=\scriptsize]{python}{./arch.tex}
\caption{Model architecture for the cVAE.}
\label{fig:arch}
\end{figure}
The cVAE model is implemented with PyTorch \cite{paszke2017automatic}. The prior distribution $p(z)$ for the latent variables $z \in \mathbb{R}^{10}$ is a standard factorized Gaussian. The decoder distribution $p_\theta(x|y,z)$ and the encoder distribution (initial variational approximation) $q_{0,\phi}(z|x,y)$ are parameterized by two feed-forward neural networks whose details can be checked in Figure \ref{fig:arch}.
The integral (\ref{eq:mc_cvae}) is approximated with 1 MC sample from the variational approximation in all experimental settings.
\subsubsection{Hyperparameter settings}
The optimizer Adam is used in all experiments, with  learning rate $\lambda=0.01$. We set the initial $\eta = 5e-5$.

\end{document}